\theoremstyle{plain}
\newtheorem{theorem}{Theorem}[section]
\newtheorem{corollary}[theorem]{Corollary}
\theoremstyle{definition}
\newtheorem{definition}[theorem]{Definition}
\theoremstyle{remark}
\crefname{equation}{Eq.}{Eqs.}
\Crefname{equation}{Equation}{Equations}
\crefname{section}{Sec.}{Secs.}
\Crefname{section}{Section}{Sections}
\crefname{appendix}{App.}{Apps.}
\Crefname{appendix}{Appendix}{Appendixes}
\crefname{equation}{Eq.}{Eqs.}
\Crefname{equation}{Equation}{Equations}
\crefname{section}{Sec.}{Secs.}
\Crefname{section}{Section}{Sections}
\crefname{appendix}{App.}{Apps.}
\Crefname{appendix}{Appendix}{Appendixes}
\crefname{proposition}{Prop.}{Props.}
\Crefname{proposition}{Proposition}{Propositions}
\crefname{algorithm}{Alg.}{Algs.}
\Crefname{algorithm}{Alg}{Algs}
\crefname{figure}{Fig.}{Figs.}
\Crefname{figure}{Fig}{Figs}
\crefname{appendix}{App.}{Apps.}
\Crefname{appendix}{App}{Apps}
\crefname{theorem}{Thm.}{Thms.}
\crefname{theorem}{Thm}{Thms}
\crefname{corollary}{Cor.}{Cors.}
\crefname{corollary}{Cor}{Cors}
\title{More Bang for the Buck: Improving the Inference of Large Language Models at a Fixed Budget using Reset and Discard (ReD)}
\author{\hspace{-0.5cm}
  Sagi Meir\textsuperscript{1,2} \quad
  Tommer D. Keidar\textsuperscript{1,2} \quad
  Noam Levi\textsuperscript{3} \quad
  Shlomi Reuveni\textsuperscript{1,2,4} \quad
  Barak Hirshberg\textsuperscript{1,2,4,$*$} \\[0.2cm]
  \textsuperscript{1}School of Chemistry, Tel Aviv University, Tel Aviv 6997801, Israel \\
  \hspace{-1.cm}\textsuperscript{2}The Center for Physics and Chemistry of Living Systems, Tel Aviv University, Tel Aviv 6997801, Israel \\
  \textsuperscript{3}School of Physics and Astronomy, Tel Aviv University, Tel Aviv 6997801, Israel \\
  \hspace{-1.cm}\textsuperscript{4}The Center for Computational Molecular and Materials Science, Tel Aviv University, Tel Aviv 6997801, Israel \\
  \hspace{-1.8cm}\textsuperscript{$*$}\texttt{hirshb@tauex.tau.ac.il}
}
\begin{document}

\maketitle

\begin{abstract}
  The performance of large language models (LLMs) on verifiable tasks is usually measured by pass@$k$, the probability of answering a question correctly at least once in $k$ trials.
    At a fixed budget, a more suitable metric is coverage@cost, the average number of unique questions answered as a function of the total number of attempts.
    We connect the two metrics and show that the empirically-observed power-law behavior in pass@$k$ leads to a sublinear growth of the coverage@cost (diminishing returns).
    To solve this problem, we propose Reset-and-Discard (ReD), a query method of LLMs that increases coverage@cost for a given budget, regardless of the pass@$k$ form. 
    Moreover, given a pass@$k$, we can quantitatively predict the savings in the total number of attempts using ReD. If pass@$k$ is not available for the model, ReD can infer its power-law exponent.
    {Experiments on three LLMs across coding (HumanEval), math (GSM8K), and reasoning (MMLU-Pro) benchmarks demonstrate that ReD substantially reduces the required attempts, tokens, and USD cost to reach a desired coverage, while also offering an efficient way to measure inference power-laws. 
    ReD's advantage is maintained for imperfect verifiers and outperforms the tested allocation baselines.
    }
\end{abstract}

\section{Introduction}
\label{sec:intro}

Generating multiple independent candidate solutions has become a standard way to improve the performance of LLMs on verifiable reasoning tasks such as coding, math, and formal proof search. 
Across model families and benchmarks, this can yield striking gains: 
relatively small models can approach or even exceed the single-sample performance of larger models when allowed many attempts~\citep{chen2021evaluatinglargelanguagemodels,brown2024llmonkeys,snell2024computeoptimal,wu2024inference_scaling}.
These gains are typically reported via \text{pass@}$k$: the probability that at least one out of $k$ generations solves a \emph{single}, randomly-chosen instance~\citep{kulal2019spoc,chen2021evaluatinglargelanguagemodels,first2023baldurwholeproofgenerationrepair,brown2024llmonkeys,hassid2024larger,hughes2024bestofnjailbreaking,chen2024llmcallsneedscaling,levi2024simple, schaeffer2025large,ehrlich2025codemonkeysscalingtesttimecompute,kwok2025robomonkeyscalingtesttimesampling}.
Recent work has uncovered a remarkably regular empirical pattern: for many tasks, $1-\text{pass@}k$ decays as a power-law in $k$ across orders of magnitude~\citep{schaeffer2025large,levi2024simple,kazdan2025efficient}.

In practice, inference compute is often shared across a workload: a batch evaluation with a fixed token cap, an automated code repair system that must address as many tickets as possible, or a service provider operating under latency and cost constraints. 
In these settings, the relevant objective is not $\text{pass@}k$, but \emph{how many distinct questions are solved} under a fixed global budget.
Yet, efficient \emph{allocation} policies under a fixed budget across \emph{many} tasks remain underexplored.

To formalize this objective we adopt \textbf{coverage@cost} - the expected number of unique problems solved after spending a total budget (measured in attempts, tokens or USD).
coverage@cost is a metric that rewards solving \emph{more} problems rather than spending disproportionate compute on a small number of hard instances.
Crucially, coverage@cost depends not only on the model, but also on the \emph{policy} used to allocate attempts across problems.

A common implicit allocation scheme is \emph{solve-to-completion}: repeatedly sample a problem until it is solved, then move to the next.
When per-problem success probabilities vary (difficulty distribution), this policy is difficulty-biased, since hard problems consume many attempts, reducing throughput for the overall workload.
Under the empirically observed power-law $\text{pass@}k$ behavior, $1-\text{pass@}k \propto k^{-\alpha}$, commonly with $0<\alpha<1$, this effect can be extreme.
In that case, we show that solve-to-completion yields \emph{sublinear} coverage@cost growth, meaning that additional budget buys diminishing returns in the number of distinct solved problems.

We propose an alternative allocation policy, \textbf{Reset-and-Discard (ReD)}:
(1) \emph{Reset} sampling to the next task in line, after a fixed number of attempts $\tau$ (in the extreme, after every attempt), and
(2) \emph{Discard} problems as soon as they are solved so no further compute is spent on them.
Intuitively, failures provide information: a problem that has already failed several times is more likely to be hard, so the expected marginal gain from another attempt is smaller than starting fresh on a new instance.
ReD implements this ``breadth-first'' approach directly, and is straightforward to deploy in any pipeline that already supports repeated sampling and verification.

Our work proves the following: (1) We connect $\text{pass@}k$ to coverage@cost via renewal theory and derive expressions for coverage@cost as a function of the budget, given pass@$k$.
(2) We prove that resetting strictly improves coverage@cost for {a given} budget and any underlying difficulty distribution, and that the optimal resetting protocol is ReD every attempt ($\tau=1$).
(3) Given an empirical or modeled $\text{pass@}k$ curve, we can predict the budget savings induced by ReD to reach any target coverage; conversely, ReD trajectories provide a statistically efficient way to estimate inference power-law exponents when direct large-$k$ $\text{pass@}k$ measurement is impractical.

We validate these results on multiple LLMs and the HumanEval benchmark~\cite{chen2021evaluatinglargelanguagemodels}, {the GSM8K dataset~\cite{cobbe2021trainingverifierssolvemath}, and a subset of the MMLU-Pro~\cite{wang2024mmlu} dataset (first 500 questions)}, demonstrating that ReD increases the number of distinct problems solved at fixed budgets, yielding large savings relative to solve-to-completion allocation.

Our main contributions are:
\begin{itemize}    
    \item We derive an \textbf{exact mapping} from $\text{pass@}k$ to coverage@cost and characterize its growth under power-law $\text{pass@}k$.
    Namely, if the pass@$k$ scales as a power-law with exponent $0<\alpha<1$, as commonly empirically observed for LLMs, a standard, solve-to-completion scheme induces sublinear growth for coverage@cost. If, however, $\alpha>1$, coverage@cost grows linearly.
    \item We propose \textbf{Reset-and-Discard (ReD)} and prove that \textbf{resetting improves coverage@cost at every budget} and that \textbf{ReD every attempt ($\tau=1$) is the optimal resetting strategy}.
    Concretely, if the pass@$k$ scales as a power-law with exponent $0<\alpha<1$, ReD induces linear growth for coverage@cost, and if $\alpha>1$, its growth is linear with a larger slope than solve-to-completion.
    \item We show how to \textbf{predict attempts savings} from $\text{pass@}k$, and how to \textbf{estimate inference-scaling exponents} from ReD runs when large-$k$ evaluation is costly.
    \item We empirically demonstrate that ReD leads to substantial throughput improvements {across verifiable coding, mathematics, and multiple-choice reasoning benchmarks.}
\end{itemize}

\section{Related work}

\textbf{Test-time compute on verifiable tasks.}
Repeated sampling with automatic verification is a standard way to amplify LLM performance on coding tasks, mathematics, and formal proof search, commonly summarized by $\text{pass@}k$~\citep{chen2021evaluatinglargelanguagemodels}.
Recent work documents inference-time scaling of coverage/pass@$k$ over large sample budgets~\citep{brown2024llmonkeys,wu2024inference_scaling}, and studies compute-optimal ways to allocate test-time compute \emph{within} a prompt via search and verifiers~\citep{snell2024computeoptimal}; see also the survey of inference-time algorithms~\citep{welleck24tmlr}.

\textbf{Inference-time scaling laws and forecasting large-$k$.}
Several works propose models that explain or predict empirical pass@$k$ scaling, including simple statistical ans\"atze and difficulty-mixture explanations~\citep{levi2024simple,schaeffer2025large}.
Since large-$k$ evaluation is expensive, recent work develops sample-efficient estimators and adaptive measurement schemes to forecast large-$k$ behavior~\citep{kazdan2025efficient}.

\textbf{Budget-aware evaluation and global throughput.}
Multiple works argue for evaluating reasoning strategies under explicit budgets (queries/tokens/USD) rather than a fixed $k$~\citep{wang2024tokeneconomies}.
In synthetic-data and self-improvement pipelines, authors also track coverage-like objectives and sampling cost when generating training data~\citep{bansal2025smaller,ding2025gsi}.

\textbf{Restarts, stochastic resetting, and heavy-tailed runtimes.}
Restart strategies are commonly used in randomized algorithms and combinatorial search, to reduce per‑instance runtime and its variability~\citep{luby1993lasvegas,gomes1997heavytails,gomes2000heavytails,walsh1999smallworld,williams2003backdoors,streeter2007restartschedules}.
{A parallel line of work characterizes when and how restart reduces mean first-passage times in stochastic processes and non-equilibrium statistical physics~\citep{evans2011resetting,reuveni2016restart,pal2017restart,evans2020stochastic,de2022optimal,kumar2023universal}.}
Recent developments have considerably expanded the use of resetting, demonstrating that it can accelerate sampling in high-dimensional diffusion and rare-event processes~\citep{blumer2022stochastic, blumer2024short, church2025accelerating, keidar2025adaptive, blumer2025have}.
In ML training, restart ideas also appear in warm-restart learning-rate schedules, resetting-based regularization under label noise, and training speedup predictions with respect to a resetting interval~\citep{loshchilov2016sgdr,bae2025stochastic,meir2025first}.

Most prior work improves \emph{per-instance} success or its measurement.
We instead optimize \emph{global throughput}: how to allocate a fixed inference budget across many independent questions.
Our ReD policy applies the restart principle \emph{across tasks} and yields provable gains in budgeted coverage.

\section{Mapping pass@\textit{k} to coverage@cost}
\label{sec:map}
\subsection{pass@\textit{k} and its asymptotic power-law behavior}
We follow similar notation to \citet{levi2024simple, schaeffer2025large, kazdan2025efficient}. 
{For clarity, we first present the analysis under a perfect verifier. In \cref{app:noisy}, we relax this assumption and show that the ReD advantage extends to noisy verifiers with false-negative rates (FNR) and false-positive rates (FPR) satisfying FNR + FPR < 1.}
We denote the probability of success at a single attempt on the $i$-th question as $p_i$ and its cumulative distribution function (CDF) as $\text{pass}_i\text{@}k = 1-(1-p_i)^k$. Each question has a different $p_i$ (``difficulty'') drawn from some distribution $\mathcal{P}(p)$. Then, averaging over the ensemble, $\text{pass@}k$ is defined as the probability of at least one success in $k$ trials,
\begin{equation}
    \text{pass@}k \coloneqq \mathbb{E}\left[ \text{pass}_i\text{@}k \right] = 1 - \mathbb{E}\left[ (1-p_i)^k \right] = 1 - \int_0^1 (1-p)^k\mathcal{P}(p)\mathrm{d}p . 
\label{eq:pass@k}
\end{equation}
\cref{eq:pass@k} is general and leads, under mild assumptions, to an asymptotic power-law behavior of pass@$k$. To see this, we consider difficulty distributions that for $p\ll 1$ have the form $\mathcal{P}(p)\simeq cp^{\alpha-1}$, where $c$ and $\alpha$ are positive constants. We then observe that $1 - \text{pass@}k=\frac{1}{k}\int_0^k (1-z/k)^k\mathcal{P}(z/k)\mathrm{d}z$, by a simple change of variable $z=pk$. Taking the large $k$ limit then yields 
\begin{equation}
    1 - \text{pass@}k \underset{k\gg 1}{\approx} 
    \frac{c}{k^\alpha}\int_0^\infty e^{-z}z^{\alpha-1}\mathrm{d}z =
     c \Gamma(\alpha) k^{-\alpha},
\label{eq:psi_scaling}
\end{equation}
where we have used the definition of the Gamma function, $\Gamma(\cdot)$.
From \cref{eq:psi_scaling} it is clear that any distribution of ``difficulties'' $\mathcal{P}(p)$ with power-law behavior at small $p$, results in a power-law scaling for $\text{pass@}k$, as also obtained by~\citet{schaeffer2025large}.

\subsection{Prediction of coverage@cost from pass@\textit{k}}
The coverage@cost metric represents the mean number of unique questions answered under a given budget or number of attempts. 

\begin{definition}
    Let $x(t)$ be the number of unique questions answered by an LLM after a cumulative number of attempts $t$. In each realization, the order of the questions in the pool is random. The model attempts to solve each question repeatedly, until it is solved. Then,
    \begin{equation}
    x(t) \coloneqq \sum_{n=1}^{\infty} \mathbb{I}_{\{J_n\leq t\}} = \text{sup}\left\{n: J_n \leq t \right\},    
\label{eq:x(t)}
\end{equation}
where, $\mathbb{I}$ is the indicator function and {$J_n \coloneqq \sum_{i=1}^{n}T^{(i)}$} is the total number of attempts to answer questions $i=1,...,n$,
where $T^{(i)}$ is the number of attempts to answer the $i$-th question in the pool for the first time. Because all the questions are i.i.d., $T^{(i)}\sim T,\,\forall i$, we define
\begin{equation}
    \text{coverage@cost}(t)\coloneqq \langle x(t)\rangle,
\end{equation}
    where we reserve the notation $\mathbb{E}[\cdot]$ for averages over the questions difficulty distribution and use brackets $\langle \cdot \rangle$ for averages over random realizations.
\end{definition}



We recognize that \cref{eq:x(t)} is a renewal process and that $\text{coverage@cost}(t)$ can be predicted using the following recurrence relation~\cite{grimmett2020probability},
\begin{equation}
    \text{coverage@cost}(t) = F(t) + \sum_{j=1}^{t}\text{coverage@cost}(t-j) \left[ F(j) - F(j-1) \right].
\label{eq:renewal_cover@cost}
\end{equation}
In \cref{eq:renewal_cover@cost}, $F(t) \coloneqq \text{Pr}(T\leq t)=\text{pass@}t$ is the CDF of the renewal process (see the full derivation in \cref{subsec:The_Renewal_Process}). Similarly, we can obtain the second moment of $x(t)$ (see \cref{subsec:The_Renewal_Process}).

\subsection{Growth of the coverage@cost for a large dataset}
Recalling the definition of the Z-transform,
\begin{align*}
    \mathcal{Z}\{\text{coverage@cost}\}
    (z)=\sum_{t=0}^\infty z^t
    \text{coverage@cost}(t),
\end{align*}
where $z\in [0,1)$. We apply it to~\cref{eq:renewal_cover@cost}, and expand around $z=1$, to get the behavior of coverage@cost at times that are larger than the typical time to answer a question in the dataset. As shown in \cref{subsec:Asymptotic_coverage@cost}, depending on the inference power-law exponent $\alpha$, two behaviors can be observed
\begin{equation}
    \begin{cases}
        \text{coverage@cost}(t) \propto t^{\alpha}&0<\alpha< 1,\\
        \text{coverage@cost}(t)\simeq\frac{t}{\mathbb{E}[T]}&\alpha>1,
    \end{cases}
\label{eq:asympt_coverage@cost}
\end{equation}
Namely, in the case where $0<\alpha<1$, $\mathbb{E}[T]$ diverges, and then coverage@cost will scale sub-linearly in $t$. However, if $\mathbb{E}[T]$ is finite, coverage@cost will scale linearly in $t$.

As we will show in the next section, ReD is always able to improve coverage@cost for all $t$.
Importantly, for the case where $\mathbb{E}[T]$ diverges, ReD will make it finite, making a dramatic, qualitative change in coverage@cost, moving from sublinear to linear growth. Analysis done in \cite{levi2024simple} on several models and datasets shows that the more common case is $\alpha<1$ which means $\mathbb{E}[T]$ diverges, underscoring the strength of the ReD methodology. If $\mathbb{E}[T]$ is finite, ReD will reduce the mean number of attempts to answer a randomly sampled question, and therefore will increase the slope of the linear growth of coverage@cost as a function of $t$. 

\section{ReD improves coverage@cost}
\label{sec:red_improves}

We first formally introduce \cref{alg:ReD} (Reset and Discard). Informally, ReD cycles through questions in rounds. We try each question up to $\tau$ times. If it is solved, it is added to \texttt{Coverage} and discarded from the question queue; otherwise,  we reset by moving it to the back of the queue, and continuing to the next question.

\begin{algorithm}
\footnotesize
\caption{Reset-and-Discard (ReD)}
\label{alg:ReD}
\begin{algorithmic}[1]
\STATE \textbf{Input:} Queue $L=[q_1,\dots,q_N]$ of questions to try, per problem attempt limit $\tau$, attempt budget $B$
\STATE $\texttt{Coverage} \gets \emptyset$; $t \gets 0$
\COMMENT{$\texttt{Coverage}$ stores solved questions; $t$ counts attempts}

\WHILE{$L$ \textbf{is not} empty \textbf{and} $t < B$}
    \STATE $\texttt{Solved} \gets \texttt{False}$ ; $r \gets 0$; $q \gets \mathrm{Front}(L)$
    \COMMENT{look at the next question to try}

    \WHILE{$r < \tau$ \textbf{and} $t < B$ \textbf{and not} $\texttt{Solved}$}
        \STATE $r \gets r+1$; $t \gets t+1$

        \IF{Attempt $q$ is successful}
            \STATE $\texttt{Coverage} \gets \texttt{Coverage} \cup \{q\}$; $\texttt{Solved} \gets \texttt{True}$; $\mathrm{Dequeue}(L)$
            \COMMENT{$q$ is solved and discarded from $L$}
        \ENDIF
    \ENDWHILE

    \IF{\textbf{not} $\texttt{Solved}$}
        \STATE  $\mathrm{Dequeue}(L)$; $\mathrm{Enqueue}(L,q)$
        \COMMENT{reset: move $q$ from the front to the back of $L$}
    \ENDIF
\ENDWHILE

\STATE \textbf{Return} $\texttt{Coverage}$
\end{algorithmic}
\end{algorithm}

    


\subsection{Connecting pass@\textit{k} to coverage@cost with ReD}
\cref{eq:renewal_cover@cost} establishes a connection between the pass@$k$ metric to the coverage@cost. Here, we show how to obtain the $\text{coverage@cost}$ under resetting, i.e., when a question that was not answered after $\tau$ attempts is replaced by a new randomly-chosen question. We first consider an infinitely large question pool and address finite-sized datasets in \cref{sec:red_finite}. This ReD procedure is characterized by a different random variable $T_{\tau}$, representing the number of attempts to answer a single, randomly chosen question correctly under resetting, defined as,
\begin{equation}
    T_{\tau} = \begin{cases}T & \text { if } T \leqslant \tau, \\ \tau+T_{\tau}^{\prime} & \text { if } T>\tau,\end{cases}
\end{equation}
where, $T_{\tau}^{\prime}$ is an i.i.d. copy of $T_{\tau}$ and independent of $T$.

\cref{eq:x(t)} describes the ReD process if we replace the random variable $T$ with $T_\tau$. Therefore, \cref{eq:renewal_cover@cost} is also valid for the ReD protocol, if we replace $F(t)$ by $F_{\tau}(t)$, the CDF of $T_\tau$.

\begin{equation}
    \text{coverage@cost}_\tau(t) = F_\tau(t) + \sum_{j=1}^{t}\text{coverage@cost}_\tau(t-j) \left[ F_\tau(j) - F_\tau(j-1) \right].
\label{eq:renewal_cover@cost_tau}
\end{equation}

Next, we use a known connection between $F(t)=\text{pass@}t$ of the solve-to-completion policy and $F_{\tau}(t)$,~\cite{eliazar2021tail}
\begin{equation}
    F_{\tau}(t) = 1 - \left(1-F(\tau)\right)^n \left(1-F(u)\right) ,\quad
    n = \left\lfloor \frac{t}{\tau}\right\rfloor, \quad u = t - n\tau .
\label{eq:CDF_with_reset}
\end{equation}
By plugging \cref{eq:CDF_with_reset} into \cref{eq:renewal_cover@cost_tau}, we can predict the $\text{coverage@cost}_{\tau}(t)$ with question-resetting every time interval $\tau$ from $F(k) =\text{pass@}k$ without resetting.

\subsection{Resetting always improves coverage@cost and resetting every attempt is the optimal strategy}
The mean number of attempts to answer a randomly-chosen question correctly under resetting at interval $\tau$, $\mathbb{E}[T_{\tau}]$, can be found by evaluating the sum $\mathbb{E}[T_{\tau}]=\sum_{k=0}^{\infty} (1-F_{\tau}(k))$. Plugging \cref{eq:CDF_with_reset} we get \cite{eliazar2020mean,Lauber_Bonomo_2021,meir2025first},
\begin{equation}
    \mathbb{E}[T_{\tau}] = \frac{G(\tau)}{F(\tau)}, \quad G(\tau)=\sum_{k=0}^{\tau-1} (1-F(k)) .
\label{eq:MFPT}
\end{equation}
Importantly, we get that both $G(\tau)$ and $1/F(\tau)$ are finite for finite $\tau\geq1$, since $0<F(t)\leq 1$ for any $t\geq1$. It means that ${\mathbb{E}[T_{\tau}]}$ is also finite. As a result, only the bottom row of \cref{eq:asympt_coverage@cost} is relevant. Inserting \cref{eq:MFPT} results in linear growth of $\text{coverage@cost}_\tau(t)$,
\begin{equation}
    \text{coverage@cost}_{\tau}(t)\simeq\frac{t}{\mathbb{E}[T_{\tau}]} = \frac{t F(\tau)}{G(\tau)} . 
\label{eq:coverage@cost_always_lin}
\end{equation}
Comparing this equation with \cref{eq:asympt_coverage@cost}, one can immediately see that using ReD for the case where $\alpha<1$, will increase dramatically the coverage@cost. Without ReD, one will get that coverage@cost will scale sub-linearly in time, and by simply applying resetting it will scale linearly, regardless the $\tau$ used, or the specific value of $\alpha<1$.

For the case where $\alpha>1$, where both solve-to-completion protocol and ReD exhibit linear growth, we show in \cref{Lemma:Resetting_is_always_beneficial} at \cref{sec:proof_ReD_opt}, that no matter what the underlying distribution of question difficulties $\mathcal{P}(p)$ is, ReD is always beneficial. 

Finally, we prove in \cref{sec:proof_ReD_opt} that the optimal resetting time is $\tau=1$ regardless of the difficulty distribution $\mathcal{P}(p)$, which is summarized in the following theorem.
\begin{theorem}[Optimality of ReD]   
    For any $\mathcal{P}(p)$ and $\forall \tau \in \mathbb{N}\setminus\{0\}$, $\mathbb{E}[T_{\tau}]\leq \mathbb{E}[T_{\tau+1}]$,  i.e., the optimal resetting time is $\tau=1$ (proof in \cref{sec:proof_ReD_opt}).
\label[theorem]{lm:OptimalityReD}
\end{theorem}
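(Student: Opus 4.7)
The plan is to reduce \cref{lm:OptimalityReD} to the single scalar inequality
\begin{equation*}
F(\tau)\, S(\tau) \;\geq\; G(\tau)\, f(\tau+1),
\end{equation*}
where $S(\tau):=1-F(\tau)$ and $f(\tau+1):=F(\tau+1)-F(\tau)$. Using $\mathbb{E}[T_\tau]=G(\tau)/F(\tau)$ from \cref{eq:MFPT} together with the recursions $G(\tau+1)=G(\tau)+S(\tau)$ and $F(\tau+1)=F(\tau)+f(\tau+1)$, cross-multiplying $G(\tau)/F(\tau)\leq G(\tau+1)/F(\tau+1)$ yields exactly the inequality above. Establishing it for every $\tau\geq 1$ gives monotonicity of $\mathbb{E}[T_\tau]$ in $\tau$, and hence optimality of $\tau=1$.

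Next, I would express each side as an expectation over the difficulty distribution by writing $F(\tau)=\mathbb{E}_p[1-(1-p)^\tau]$, $S(\tau)=\mathbb{E}_p[(1-p)^\tau]$, $f(\tau+1)=\mathbb{E}_p[p(1-p)^\tau]$, and $G(\tau)=\mathbb{E}_p[A(p)]$ with $A(p):=(1-(1-p)^\tau)/p=\sum_{k=0}^{\tau-1}(1-p)^k$. A pointwise identity is key: $A(p)\cdot p(1-p)^\tau=(1-(1-p)^\tau)(1-p)^\tau$, so for any fixed $p$ the two sides of the target inequality agree. The gap between the averaged sides is therefore a covariance-like quantity, which I would expose by expanding both products as double integrals over independent copies $p,q\sim\mathcal{P}$ and symmetrizing under $p\leftrightarrow q$. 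This reduces the desired inequality to
\begin{equation*}
\int\!\!\int (q-p)\bigl[A(p)(1-q)^\tau - A(q)(1-p)^\tau\bigr]\, \mathcal{P}(p)\mathcal{P}(q)\, dp\, dq \;\leq\; 0.
\end{equation*}

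The remaining step is to show that the integrand is pointwise non-positive, which holds whenever $g(p):=A(p)/(1-p)^\tau$ is non-decreasing in $p$ on $(0,1)$, so that $(q-p)$ and $A(p)(1-q)^\tau - A(q)(1-p)^\tau$ always carry opposite signs. Monotonicity of $g$ follows from the substitution $u:=1-p\in(0,1)$:
\begin{equation*}
g(p)=\frac{1-u^\tau}{(1-u)\, u^\tau}=\frac{1}{u^\tau}\sum_{k=0}^{\tau-1} u^{k}=\sum_{j=1}^{\tau} u^{-j},
\end{equation*}
a finite sum of strictly decreasing functions of $u$, hence strictly increasing in $p$. The main obstacle is the reformulation itself: once one notices that the two sides coincide for each fixed $p$, so the inequality is driven entirely by the dispersion of $\mathcal{P}$, symmetrization in $p\leftrightarrow q$ collapses the statement to the pointwise monotonicity of $g$, which is elementary after the substitution $u=1-p$.
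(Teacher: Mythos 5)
Your proof is correct, and it arrives at exactly the same intermediate inequality as the paper: your $F(\tau)S(\tau)\geq G(\tau)f(\tau+1)$ is, once each term is written as an expectation over $\mathcal{P}$, precisely the paper's $\mathbb{E}[1-q^\tau]\mathbb{E}[q^\tau]\geq \mathbb{E}\bigl[(1-q^\tau)/p\bigr]\mathbb{E}[pq^\tau]$ with $q=1-p$, and the monotone function you exhibit, $g(p)=A(p)/(1-p)^\tau$, is the reciprocal of the paper's $q^\tau/u=p(1-p)^\tau/(1-(1-p)^\tau)$. The difference is in how the product-of-expectations inequality is closed: the paper introduces the tilted expectations $\mathbb{E}_\lambda[\cdot]$ and cites the continuous Chebyshev sum inequality to conclude that $\mathrm{COV}_u(p,q^\tau/u)\leq 0$, whereas you symmetrize over two independent copies $(p,q)\sim\mathcal{P}\otimes\mathcal{P}$ and verify that the symmetrized integrand $(q-p)\bigl[A(p)(1-q)^\tau-A(q)(1-p)^\tau\bigr]$ is pointwise non-positive. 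These are two presentations of the same covariance argument, but yours is more self-contained in two respects: the pointwise cancellation $A(p)\,p(1-p)^\tau=(1-(1-p)^\tau)(1-p)^\tau$ makes it transparent that the gap is a purely off-diagonal effect (so nothing is lost in symmetrization), and the geometric-series identity $g(p)=\sum_{j=1}^{\tau}(1-p)^{-j}$ gives a one-line proof of monotonicity where the paper asserts, without computation, that $p(1-p)^\tau/(1-(1-p)^\tau)$ is strictly decreasing.
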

So far, we considered only \emph{deterministic} resetting intervals, but, in general, a \emph{stochastic} resetting time can be sampled from some distribution. However, it is known that a deterministic $\tau$ minimizes the mean time for completion of a task under resetting \cite{pal2017restart}. Therefore, \cref{lm:OptimalityReD} also shows that resetting deterministically after every attempt is the optimal resetting strategy.

We extend this result to imperfect verifiers in \cref{app:noisy}, as summarized below:
{\begin{theorem}[ReD point-wise advantage under imperfect verifiers]
\label{prop:imperfect}
For any non-degenerate $\mathcal{P}(p)$ and any verifier with $\mathrm{FNR}+\mathrm{FPR}<1$, ReD strictly maximizes the expected reward per attempt, i.e., the probability that a question is both solved correctly and marked as correct by the verifier, over all multiple-independent-attempt strategies (proof in \cref{app:noisy}).
\end{theorem}
Since $\mathrm{FNR}=\mathrm{FPR}=0$ satisfies $\mathrm{FNR}+\mathrm{FPR}<1$, \cref{prop:imperfect} shows that ReD is the better strategy at every budget, with or without verifier noise.
}

\subsection{Prediction of coverage@cost for a finite-sized dataset}
\label{sec:red_finite}
To derive the equations for coverage@cost under ReD we neglected the effects of discarding the questions, assuming an infinite pool of questions, i.e., the probability of re-answering the same question is zero. Here, we derive an expression for coverage@cost for ReD for a finite-sized dataset composed of $N$ questions and a resetting interval of $\tau=1$. In this scenario, we have repeated rounds where we attempt to answer each of the remaining questions once.

We first use pass@$k$ to find the number of remaining questions, $R_n$, after $n$ rounds. The mean number of unanswered question at the end of the $n$-th round is, 
\begin{equation}
    \langle R_n\rangle = N(1-F(n))=N(1-\text{pass@}n).
    \label{eq:remainder}
\end{equation}
Because the probability that a question survives after $n$ rounds is $1-F(n)$.
Then, 
\begin{equation}
    \text{coverage@cost}_{\tau=1}( t(n))=N-\langle R_n\rangle=N \text{pass@}n, 
    \label{eq:predict_coverage_finite}
\end{equation}
where $t(n)$ is the total number of attempts at the end of the $n$-th round.
In turn, the total number of attempts is given by,
\begin{equation}
    t(n) = \sum_{k=0}^{n-1}R_k.
    \label{eq:recursive_t}
\end{equation}
Taking averages and plugging \cref{eq:remainder} into \cref{eq:recursive_t}, gives
\begin{equation}
    \langle t(n)\rangle = N\sum_{k=0}^{n-1}(1-F(k)) = N\sum_{k=0}^{n-1}(1-\text{pass@}k).
    \label{eq:predict_time_coverage_finite_t}
\end{equation}
To predict coverage@cost$_{\tau=1}$ for a finite dataset, we replace $t(n)$ in \cref{eq:predict_coverage_finite} by its average $\langle t(n)\rangle$ to obtain the following approximation 
\begin{equation}
    \text{coverage@cost}_{\tau=1}(\langle t(n)\rangle)\approx N \text{pass@}n,
    \label{eq:predict_coverage_finite_t}
\end{equation}
for coverage@cost at fixed time points $t_n=\langle t(n)\rangle$. This approximation neglects fluctuations in $R_n$ and $t_n$. We show in~\cref{sec:experiments} that it is works well for LLMs.  

\section{Inference of pass@\textit{k} power-law exponent from coverage@cost with ReD}
\label{sec:inference_exponent}

\subsection{ReD changes the difficulty distribution}
In \cref{sec:red_improves}, we created a framework that predicts coverage@cost given that pass@$k$ is already known. Here, we focus on the scenario where pass@$k$ is unknown and show that the ReD strategy enables inference of the pass@$k$ scaling law exponent, without full evaluation of pass@$k$.

We first consider the distribution of question difficulties $\mathcal{P}(p)$. In the ReD protocol, since we discard questions that were answered correctly in previous rounds, we effectively change the difficulty distribution at every round. We denote the distribution at the $n$-th round as $\mathcal{P}_n(p)$. We will now show that $\mathcal{P}_n(p)$ is guaranteed to approach a beta distribution after several rounds as long as, for $p\ll1$, the initial distribution $\mathcal{P}_0(p)\simeq cp^{\alpha-1}$. We then derive results for $\text{coverage@cost}_{\tau=1}(t)$ assuming $\mathcal{P}_0(p) \sim \text{Beta}(p;\alpha,\beta)$, neglecting the initial relaxation phase. 

\subsection{Asymptotic convergence of difficulty to the Beta distribution} 
\label{subsec:Beta_is_attractor}
The distribution $\mathcal{P}_n(p)$ represents all the questions that were left unanswered after $n$ rounds. This is given by 
\begin{equation}
    \mathcal{P}_n(p) = \frac{\mathcal{P}_0(p)(1-p)^n}{\int_0^1 (1-p)^n\mathcal{P}_0(p)\mathrm{d}p},
\label{eq:updating_dist}
\end{equation}
where the new weight of questions with difficulty $p$ is given as a product of their initial weight $\mathcal{P}_0(p)$ and the probability, $(1-p)^n$, to survive $n$ rounds. 
Note that for large values of $n$ the term $(1-p)^n$ suppresses contributions from high values of $p$. In this limit, one can safely approximate $\mathcal{P}_0(p)\simeq cp^{\alpha-1}$, to obtain
\begin{equation}
    \mathcal{P}_n(p) \underset{n\gg 1}{\approx} \frac{p^{\alpha-1}(1-p)^n}{\int_0^1 p^{\alpha-1} (1-p)^n \mathrm{d}p} \sim \text{Beta}(p;\alpha,n+1).
\end{equation}

\subsection{Inferring the power-law exponent from ReD}
Previous works~\cite{brown2024llmonkeys, schaeffer2025large, kazdan2025efficient} have introduced several techniques to obtain the scaling exponent $\alpha$ of $1-\text{pass@}k$. Here, we derive an estimation method that is applicable to the ReD process.
We start by obtaining an expression for $\mathcal{P}_n(p)$ for all values of $n$. 

Neglecting the relaxation towards the Beta distribution and repeating the derivation in \cref{subsec:Beta_is_attractor} using $\mathcal{P}_0(p)=\text{Beta}(p;\alpha,\beta)$ in \cref{eq:updating_dist}, we obtain,
\begin{equation}
    \mathcal{P}_n(p) = \text{Beta}(p;\alpha,\beta +n).
    \label{eq:P_n}
\end{equation}
The meaning of \cref{eq:P_n} is that $\gamma_n R_n$ questions will be answered correctly, on average, at the $n$-th round, where $\gamma_n \coloneqq \alpha/(\alpha+\beta+n)$ is the mean of $\text{Beta}(p;\alpha,\beta +n)$. As a result, the dynamics of this process can be written as,
\begin{equation}
    \langle R_{n+1}\rangle =\langle R_n\rangle - \gamma_n \langle R_n\rangle.
\label{eq:R_n_dynamics}
\end{equation}
By rearranging \cref{eq:R_n_dynamics}, and using the definition of $\gamma_n$, we obtain a linear equation with respect to the number of rounds,
\begin{equation}
    -\frac{\langle R_n\rangle}{\langle R_{n+1}-R_n\rangle} = \frac{1}{\alpha}n + \frac{\alpha+\beta}{\alpha} .
    \label{eq:inference_power_law}
\end{equation}
Thus, plotting the ratio $\langle R_n\rangle/\langle R_{n+1}-R_n\rangle$ versus $n$ yields a straight line whose slope is $1/\alpha$.
After a few rounds, it is possible to extract the scaling exponent $\alpha$ from a linear fit.

\section{Experiments on large language models}
\label{sec:experiments}

To demonstrate the efficiency of ReD on LLMs, we ran experiments on the HumanEval dataset~\cite{chen2021evaluatinglargelanguagemodels} {and provide additional
validation on GSM8K~\cite{cobbe2021trainingverifierssolvemath} and MMLU-Pro~\cite{wang2024mmlu} in \cref{app:benchmarks}}.
We used two query protocols: The first is the solve-to-completion method, in which we ask a question repeatedly until it is answered correctly before moving to the next question. The second is the ReD approach, resetting every attempt, which amounts to asking questions in rounds: we ask all the questions once and discard the ones that were answered correctly from the next round.

In practice, we performed these experiments by evaluating pass@$k$ up to $k=100$ once for each model and saving, for each question and every attempt, whether it was answered correctly in a results matrix of dimension questions$\times$attempts. We also recorded how many input and output tokens were used. 
To generate many random realizations of the solve-to-completion evaluation procedure, we shuffle rows and columns of the results matrix and analyze them as if we used the solve-to-completion protocol. From the same shuffled data, we also generate all the realizations for the ReD protocol, by analyzing the results matrix in rounds, going column-by-column, and discarding rows that were answered in previous rounds. 

We performed the experiments using the Groq API, and three different models: llama-3.1-8b-instant, llama-3.3-70b-versatile and openai/gpt-oss-20b~\cite{openai2025gptoss120bgptoss20bmodel}. All three models were given the same prompts. We set a strict system prompt to reduce conversational filler and enforce a consistent output format: 
\textit{``You are an intelligent coding assistant. You will be provided with a function signature and docstring. You must complete the function body. Return ONLY the python code for the function implementation. Do not add explanations or tests. Wrap the code in a python markdown block."}
Then, we used the standard HumanEval prompt, including the function signature and docstring. Regex is used to extract the output code. We used a temperature of 0.8 in all experiments. All the code used to generate the results of this paper will be posted on GitHub. 

\begin{figure}
    \centering
    \includegraphics[width=\linewidth]{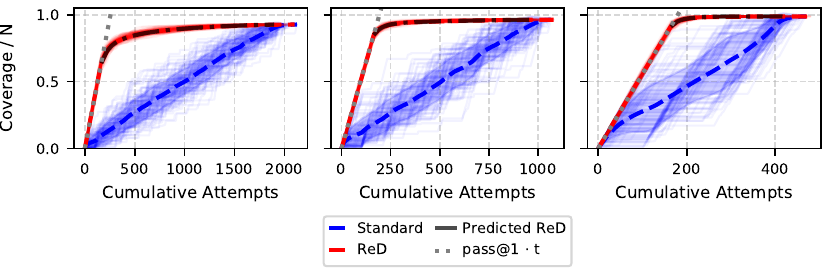}
    \caption{coverage@cost normalized by the total number of questions, $N=164$, against cumulative number of attempts for ReD and the standard (solve-to-completion) evaluation protocol for three models: (Left) llama-3.1-8b-instant, (Middle) llama-3.3-70b-versatile, and (Right) gpt-oss-20b. The dashed lines and shaded regions represent the mean and standard deviation over random realizations, respectively. The dashed-dotted line represents predictions using \cref{eq:predict_time_coverage_finite_t,eq:predict_coverage_finite_t}. The dotted line shows pass@$1\cdot t$, which is the expected behavior in the first ReD round.\vspace{-3mm}}
    \label{fig:LLM_exp1}
\end{figure}

\cref{fig:LLM_exp1} shows the comparison between the coverage@cost, normalized by the total number of questions, as a function of the cumulative number of attempts for the three models using solve-to-completion and ReD. For each model, we plot the mean coverage over 100 random realizations (dashed lines) and its standard deviation (shaded region). We also show the realizations used for each evaluation protocol. For all models, we see that the coverage@cost is always higher using ReD, on average, than the solve-to-completion protocol, while leading to the same asymptotic coverage. \looseness=-1

{\cref{fig:LLM_exp2} compares the 8b model (standard and ReD) against the larger models (standard only) across three cost metrics, using Groq API pricing of \$0.05/\$0.08, \$0.59/\$0.79, and \$0.075/\$0.30 per 1M input/output tokens for the 8b, 70b, and gpt-oss-20b models, respectively. In terms of attempts, ReD on the 8b model is more efficient than standard 70b until ${\sim}90\%$ coverage and comparable to standard gpt-oss-20b until ${\sim}80\%$. In terms of tokens, gpt-oss-20b is considerably more verbose, so ReD on the 8b model outperforms it across nearly all coverage levels. In terms of USD, ReD on the 8b model is the most cost-effective strategy in all cases, since the larger models are either verbose or expensive per token. The preferred hybrid strategy is to run ReD with the small model first, then route only unsolved problems to the larger model.}


\begin{figure}
    \centering
    \includegraphics[width=\linewidth]{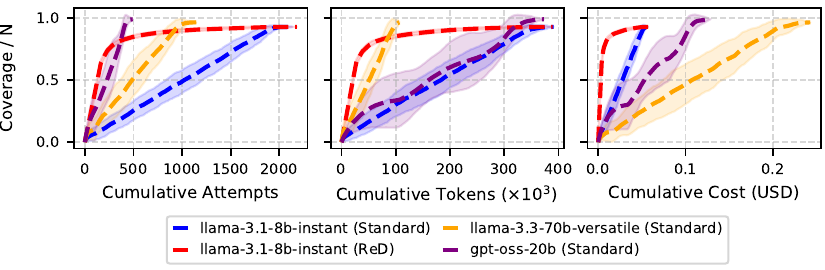}
    \caption{coverage@cost normalized by the total number of questions, $N=164$, of llama-3.1-8b-instant (standard, solve-to-completion versus ReD) compared to llama-70b-versatile and gpt-oss-20b (standard, solve-to-completion only), measured by (Left) Attempts, (Middle) Tokens, and (Right) USD cost. The dashed lines and shaded regions represent the mean and standard deviation over random realizations, respectively.\vspace{-3mm}}
    \label{fig:LLM_exp2}
\end{figure}

Finally, we demonstrate in~\cref{fig:LLM_inference} the procedure to infer the power-law exponent of the smaller llama model using ReD. We show that using~\cref{eq:inference_power_law} we are able correctly infer the power-law behavior from a linear fit of $\langle R_n\rangle/\langle R_{n+1} - R_n\rangle$ against $n$. The obtained power-law is $0.34\pm0.01$, which is in good agreement with the power-law estimated from the behavior of 1-pass@$k$ at high $k$ of $0.34$ (see \cref{fig:loglog_alpha}). Note that the linear regression is accurate as long as $R_n\gg1$, otherwise the inverse of $R_{n+1}-R_n$ introduces large noise. Therefore, the linear fit was done up to the 15th round. The inference of the power-laws exponents for the two larger models is not shown because they solve almost all of the questions in very few rounds using ReD.

{\textbf{Additional benchmarks and baselines.} We validate on GSM8K and MMLU-Pro at two model scales (\cref{app:benchmarks}): on GSM8K, ReD reaches 81\% coverage after the first round versus 34\% for standard sampling. Noisy-verifier experiments (\cref{app:noisy}) and comparisons against Continuous Reflection (\cref{app:baselines}) consistently show ReD dominating over other baselines across budgets. A hardware latency analysis accounting for KV-cache effects is in \cref{app:kvcache}.}

\begin{figure}
    \centering
    \includegraphics[width=0.4\linewidth]{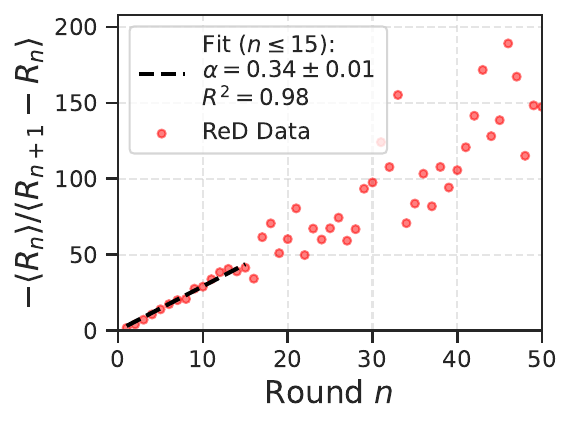}
    \caption{The ratio $-\langle R_n\rangle/\langle R_{n+1} - R_n\rangle$ as a function of the round number $n$ for llama-3.1-8b-instant, where $R_n$ is the number of unsolved problems at round $n$.\vspace{-3mm}}
    \label{fig:LLM_inference}
\end{figure}

\section{Summary and outlook}
\label{sec:conclusions}

We proposed Reset-and-Discard, ReD, a new resetting-based query protocol of LLMs that improves coverage@cost at a fixed budget.
We mapped pass@$k$ to coverage@cost and showed that power-law behavior in pass@$k$ with $0<\alpha<1$ leads to sublinear growth of coverage@cost. We proved that ReD always improves coverage@cost; particularly, in the case of $0<\alpha<1$, ReD leads to a linear growth of coverage@cost and the optimal resetting protocol is ReD every attempt. {We further proved that this advantage persists for any verifier with $\text{FNR}+\text{FPR}<1$. We validated these results across three LLMs and three benchmarks (coding, math, reasoning), showing that ReD consistently dominates solve-to-completion allocation across all budgets tested, and that the advantage is robust to realistic verifier noise.}

{ReD enables a shift from ``Pay-per-Token'' to ``Outcome-as-a-Service''~\citep{chen2024frugalgpt}, focuses compute on the frontier of learnability~\citep{zelikman2022star,gulcehre2023rest}, and enables hierarchical routing from small to large models in resource-constrained settings~\citep{bansal2025smaller}.}

\section{Limitations}
{This work addresses inference-time budget allocation for verifiable tasks, where solution correctness can be automatically assessed, such as code generation, mathematical problem solving, and reasoning tasks. Tasks without verification fall outside the intended scope.
Finally, ReD focuses solely on reallocating a fixed inference budget across problems. Integrating it with allocation strategies with learned difficulty prediction or training-time adaptations is likely possible, but beyond the scope of this work.}

\section*{Acknowledgments}
B.H. acknowledges support from the Israel Science Foundation (grants No. 1037/22 and 1312/22), the Pazy Foundation of the IAEC-UPBC (grant No. 415-2023). This project has received funding from the European Research Council (ERC) under the European Union’s Horizon 2020 research and innovation program (grant agreement No. 947731 to S.R.). S.M. acknowledges support from the Tel Aviv University Center for Artificial Intelligence and Data Science (TAD), and to the Israeli Planning and Budgeting Committee (VATAT).

\bibliographystyle{unsrtnat}
\bibliography{refs}


\appendix

\newpage

\renewcommand{\thefigure}{S\arabic{figure}}
\renewcommand{\thetable}{S\arabic{table}}
\renewcommand{\theequation}{S\arabic{equation}}
\setcounter{figure}{0} 
\setcounter{table}{0}
\setcounter{equation}{0}

\part*{\clearpage Appendices}

\section{Additional benchmarks}
\label[appendix]{app:benchmarks}
We extended evaluation beyond HumanEval to GSM8K ($N=1{,}319$ math problems) and a subset of MMLU-Pro (first $N=500$, multi-discipline multiple-choice reasoning problems), on two model scales (llama-3.1-8b and llama-3.3-70b). We report below (\cref{tab:GSM8K} and \cref{tab:MMLU-Pro}) coverage@cost for these datasets, evaluated at $k\times N$ total attempts, averaged over 1,000 realizations.
\begin{table}[H]
    \centering
    \caption{GSM8K ($N=1{,}319$)}
    \begin{tabular}{llllll}
        Model & Method & $1 \times N$ & $2 \times N$ & $3 \times N$ & $5 \times N$ \\
        \hline 8b & Standard & $33.6 \%$ & $66.7 \%$ & $98.3 \%$ & $99.0 \%$ \\
        \hline & ReD & $\mathbf{8 1 . 3} \%$ & $\mathbf{9 8 . 1 \%}$ & $99.0 \%$ & $99.0 \%$ \\
        \hline 70b & Standard & $25.1 \%$ & $49.6 \%$ & $74.4 \%$ & $97.5 \%$ \\
        \hline & ReD & $\mathbf{9 3 . 2 \%}$ & $\mathbf{9 7 . 0 \%}$ & $\mathbf{9 7 . 3 \%}$ & $97.5 \%$ 
    \end{tabular}
    \label{tab:GSM8K}
\end{table}

\begin{table}[H]
    \centering
    \caption{MMLU-Pro ($N=500$)}
    \begin{tabular}{lllll}
        Model & Method & $1 \times N$ & $3 \times N$ & $5 \times N$ \\
        \hline 8b & Standard & $10.4 \%$ & $30.3 \%$ & $50.3 \%$ \\
        \hline & ReD & $\mathbf{4 6 . 5} \%$ & $\mathbf{7 6 . 1} \%$ & $\mathbf{8 7 . 0} \%$ \\
        \hline 70b & Standard & $9.7 \%$ & $28.2 \%$ & $46.5 \%$ \\
        \hline & ReD & $\mathbf{7 1 . 8} \%$ & $\mathbf{8 7 . 1} \%$ & $\mathbf{9 0 . 3} \%$
    \end{tabular}
    \label{tab:MMLU-Pro}
\end{table}
At a range of total attempts, ReD substantially outperforms solve-to-completion for both datasets and both models, demonstrating its utility across three task types (coding, math, reasoning).

From a theoretical standpoint, as long as there is some variability among questions in the dataset, we proved that ReD must accelerate the inference.

\section{Connecting coverage@cost to pass@k for a large dataset}
In this appendix, we recap known results in renewal theory \cite{grimmett2020probability}. We reproduce the proofs below for completeness and to facilitate reading.
\subsection{Proof of \cref{eq:renewal_cover@cost}}
\label[appendix]{subsec:The_Renewal_Process}

In the main text, we defined $x(t)$ as follows,
\begin{equation}
    x(t) \coloneqq \sum_{n=1}^{\infty} \mathbb{I}_{\{J_n\leq t\}} = \text{sup}\left\{n: J_n \leq t \right\}, \quad J_n\coloneqq\sum_{i=1}^{n}T^{(i)},
\end{equation}
where $T^{(i)}$ are i.i.d. positive random variables distributed as $T$. Given that the first question was answered correctly at step $j$, renewal processes have the following recursion relation,
\begin{equation}
    x(t)\stackrel{d}{=}
    \begin{cases}
        1 + x(t-j) &\text{if } j\leq t, \\
        0&\text{if } j>t.
    \end{cases}
    \label{eq:si_recursion}
\end{equation}
In other words, for $t<j$, we have $x(t)=0$, otherwise, $x(t)\stackrel{d}{=} 1 + x(t-j)$. This is because the first success was at time $j$, after which, the process renews itself and the number of additional successes from this time onward is distributed as $x(t-j)$.

To prove \cref{eq:renewal_cover@cost}, we introduce the following notations: $m_1(t)\coloneqq\text{coverage@cost}(t)$ and $f(j)\coloneqq F(j)-F(j-1)$, where $F(j)$ is the CDF of $T$ and $f(j)$ is its probability mass function. Note that with this notation, \cref{eq:renewal_cover@cost}, has the following form
\begin{equation}
   m_1(t) = F(t) + \sum_{j=1}^{t}m_1(t-j) f(j).
   \label{eq:eq6_SI}
\end{equation}
Next, we use the recursion relation in \cref{eq:si_recursion} with the law of total expectation, $\langle x\rangle=\langle\langle x\mid y\rangle\rangle$, where $\langle x\mid y\rangle$ is the conditional expectation of $x$ given $y$.  
\begin{equation}
\begin{split}
    &m_1(t)=\langle x(t)\rangle = \langle\langle x(t)\mid T^{(1)} \rangle\rangle = \sum_{j=1}^{\infty} \langle[x(t)\mid T^{(1)}=j]\rangle f(j) = \sum_{j=1}^{\infty} \mathbb{I}_{\{j \leq t \}}(1+\langle x(t-j)\rangle) f(j) \\
    &= \sum_{j=1}^t (1+m_1(t-j)) f(j) = F(t) + \sum_{j=1}^{t}m_1(t-j)f(j).
\end{split}
\end{equation}
Replacing $m_1(t)=\text{coverage@cost}(t)$ gives \cref{eq:renewal_cover@cost}.
Similarly, we now obtain the second moment.

We use the recursive relation in \cref{eq:si_recursion}, and plug $x^2(t)\stackrel{d}{=}(1 + x(t-j))^2 = 1+2x(t-j) + x^2(t-j)$ for time $j\leq t$.
\begin{equation}
\begin{split}
    &m_2(t)=\langle x^2(t)\rangle = \langle\langle x^2(t)\mid T^{(1)}\rangle\rangle = \sum_{j=1}^{\infty} \langle[x^2(t)\mid T^{(1)}=j]\rangle f(j) = \\
    &\sum_{j=1}^{\infty} \mathbb{I}_{\{j \leq t \}}\left(1+2\langle x(t-j)\rangle+\langle x^2(t-j)\rangle \right) f(j) =
    \sum_{j=1}^{t} (1+2m_1(t-j)+m_2(t-j)) f(j) =\\
    &\sum_{j=1}^{t} (2+2m_1(t-j)-1+m_2(t-j)) f(j) = 2m_1(t) - F(t) + \sum_{j=1}^{t}m_2(t-j)f(j). 
\end{split}
\end{equation}
Overall, we get
\begin{equation}
    m_2(t) = 2m_1(t) - F(t) + \sum_{j=1}^{t}m_2(t-j)f(j).
\label{eq:renewal_2nd_moment}
\end{equation}

\subsection{Proof of \cref{eq:asympt_coverage@cost}}
\label[appendix]{subsec:Asymptotic_coverage@cost}
We start by taking the $Z$-transform of \cref{eq:eq6_SI,eq:renewal_2nd_moment}, to get
\begin{equation}\label{eq: renewal z trans}
\begin{split}
    \mathcal{Z}\{m_1\}(z)&=\frac{\Tilde{f}(z)}{(1-z)\left(1-\Tilde{f}(z)\right)},\\
    \mathcal{Z}\{m_2\}(z)&=\frac{\Tilde{f}(z)\left(1+\Tilde{f}(z)\right)}{(1-z)\left(1-\Tilde{f}(z)\right)^2}.
\end{split}
\end{equation}
Where $\Tilde{f}(z)\equiv\sum_{t=0}^\infty f(t)z^t$ is the $Z$-transform of the probability mass function of the time of answering a question. We used the relation between the $Z$-transform of the CDF and the probability mass function, $\Tilde{F}(z)=\Tilde{f}(z)/(1-z)$, and the convolution theorem for $Z$-transforms \cite{feller1991introduction}.

Taking the long-time limit is equivalent to taking the limit $z\to1$, and inverting the transform. All inversions were done using the Tauberian theorem for $Z$-transforms \cite{feller1991introduction}. If $1-\text{pass@}k$ decays to zero faster than $ck^{-1}$, with $c$ being some constant, the mean time to answer a question will be $\mathbb{E}[T]$, and $\Tilde{f}(z)=1-(1-z)\mathbb{E}[T]+o(1-z)$. Therefore, the asymptotic behavior around $z=1$ of \cref{eq: renewal z trans} is
\begin{equation}
    \begin{split}
    \mathcal{Z}\{m_1\}(z)&\simeq\frac{1}{(1-z)^2\mathbb{E}[T]}\Rightarrow m_1(t)\simeq\frac{t}{\mathbb{E}[T]},\\
    \mathcal{Z}\{m_2\}(z)&\simeq\frac{2}{(1-z)^3\mathbb{E}^2[T]}\Rightarrow m_2(t)\simeq\frac{t^2}{\mathbb{E}^2[T]}.
\end{split}
\end{equation}
We got that in this case, coverage@cost scales linearly in time, and the slope is the inverse of the mean number of attempts to answer a single question.

It is also evident that the standard deviation of the number of unique questions answered $\sqrt{m_2(t)-m_1^2(t)}$ grows slower than $t$ at long times. Therefore, the ratio of it with coverage@cost approaches $0$ at late times. This means that in this scenario $x(t)/m_1(t)$ becomes deterministic and coverage@cost provides a good description of the dynamics. 

The other case is the one in which $1-\text{pass@}k$ decays as $ck^{-\alpha}/\Gamma(1-\alpha)$, with $0<\alpha<1$, for which we get that around $z=1$, $\Tilde{f}(z)= 1-c(1-z)^\alpha+o((1-z)^\alpha)$. Then
\begin{equation}
\begin{split}
    \mathcal{Z}\{m_1\}(z)&\simeq\frac{1}{c(1-z)^{1+\alpha}}\Rightarrow m_1(t)\simeq\frac{t^\alpha}{c\Gamma(1+\alpha)},\\
    \mathcal{Z}\{m_2\}(z)&\simeq\frac{2}{c^2(1-z)^{1+2\alpha}}\Rightarrow m_2(t)\simeq\frac{2t^{2\alpha}}{c^2\Gamma(2\alpha+1)}.
\end{split}
\end{equation}
Observe that here, the behavior of coverage@cost becomes sub-linear, with a power controlled by the inference scaling power $\alpha$. 

Regarding $\sigma_{x(t)}$ the standard deviation in $x(t)$, and its ratio with coverage@cost, here they approach
\begin{equation}
    \begin{split}
        \sigma_{x(t)}\simeq\sqrt{\frac{2}{\Gamma(2\alpha+1)}-\frac{1}{\Gamma^2(\alpha+1)}}\frac{t^\alpha}{c},\\
        \frac{\sigma_{x(t)}}{m_1(t)}\simeq\sqrt{\frac{2\Gamma^2(\alpha+1)}{\Gamma(2\alpha+1)}-1}\in(0,1).
    \end{split}
\end{equation}
Here, the ratio between the standard deviation and the mean number of unique questions answered approaches a constant. The value of this constant varies with the value of the inference scaling power, and it is monotonically decreasing for $0<\alpha<1$.

The transition between the linear and sub-linear behavior of coverage@cost appears when $\alpha=1$. For $\alpha>1$, the mean time to answer a randomly selected question is finite, and for $\alpha<1$ it diverges. Mathematically, this is the same behavior as the well-studied transition between diffusive and sub-diffusive behavior in continuous-time random walk models used in the physics of transport phenomena \cite{Scher1975Anomalous, metzler2000random}.

\section{Proof of \cref{lm:OptimalityReD}}
\label[appendix]{sec:proof_ReD_opt}


\begin{proof}
    We will show that $\mathbb{E}[T_{\tau+1}]\geq\mathbb{E}[T_\tau]$ for $\tau\geq1$ and for any $\mathcal{P}(p)$. We do so by defining the following difference $D(\tau) \coloneqq \mathbb{E}[T_{\tau+1}]-\mathbb{E}[T_{\tau}]$ and demonstrate that it is non-negative. We write the difference using \cref{eq:MFPT} 
    \begin{equation}
    \begin{split}
         D(\tau) = \frac{G(\tau+1)}{F(\tau+1)} - \frac{G(\tau)}{F(\tau)} = 
         \frac{F(\tau)G(\tau+1)-F(\tau+1)G(\tau)}{F(\tau)F(\tau+1)} .
    \end{split}\label{eq:D(tau)_new}
    \end{equation}
    Because the denominator in \cref{eq:D(tau)_new} is non-negative, it is left to show that the numerator is non-negative as well. 
    We start by recalling that 
    \begin{equation}
           F(\tau) \coloneqq \text{Pr}(T\leq \tau) = 1 - \int_0^1 (1-p)^{\tau}\mathcal{P}(p)\mathrm{d}p =1-\mathbb{E}\left[(1-p)^{\tau}\right], 
    \end{equation}
    and
    \begin{equation}
    \begin{split}
        &G(\tau)\coloneqq\sum_{k=0}^{\tau-1} (1-F(k))= \int_0^1 \sum_{k=0}^{\tau-1} (1-p)^k\mathcal{P}(p)\mathrm{d}p = 
        \int_0^1 \frac{1-(1-p)^{\tau}}{p}\mathcal{P}(p)\mathrm{d}p = \\
        &\mathbb{E}\left[\frac{1-(1-p)^{\tau}}{p}\right].
    \end{split}
    \end{equation}
    We can thus write the numerator of \cref{eq:D(tau)_new} as 
    \begin{equation}\label{eq:numerator_of_D(tau)_new}
           F(\tau)G(\tau+1)-F(\tau+1)G(\tau)= \mathbb{E}\left[1-q^{\tau}\right]\mathbb{E}\left[\frac{1-q^{\tau+1}}{p}\right]-\mathbb{E}\left[1-q^{\tau+1}\right]\mathbb{E}\left[\frac{1-q^{\tau}}{p}\right] .
        \end{equation}
    where we have set $q\coloneqq 1-p$. We now observe that $\mathbb{E}\left[1-q^{\tau+1}\right]=\mathbb{E}\left[1-(1-p)q^{\tau}\right]=\mathbb{E}\left[1-q^{\tau}\right]+\mathbb{E}\left[pq^{\tau}\right]$, and similarly $\mathbb{E}\left[\frac{1-q^{\tau+1}}{p}\right]=\mathbb{E}\left[\frac{1-(1-p)q^{\tau}}{p}\right]=\mathbb{E}\left[\frac{1-q^{\tau}}{p}\right]+\mathbb{E}\left[q^{\tau}\right]$. Substituting back into the right hand side of  \cref{eq:numerator_of_D(tau)_new} we obtain
    \begin{equation}
            F(\tau)G(\tau+1)-F(\tau+1)G(\tau)=\mathbb{E}\left[1-q^{\tau}\right]\mathbb{E}\left[q^\tau\right]-\mathbb{E}\left[\frac{1-q^{\tau}}{p}\right]\mathbb{E}\left[pq^\tau\right].
        \label{eq:numerator_of_D(tau)_new2}
        \end{equation}    
We now define $u\coloneqq (1-q^{\tau})/p$ and continue to develop the right hand side \cref{eq:numerator_of_D(tau)_new2}
    \begin{equation}\label{eq:numerator_par_new1}
    \begin{split}
        &\mathbb{E}\left[ 1 - q^{\tau} \right]\mathbb{E}\left[ q^{\tau} \right] - \mathbb{E}\left[ pq^{\tau} \right]  \mathbb{E}[u] = 
        \mathbb{E}[p u]\mathbb{E}[q^{\tau}] - \mathbb{E}[p q^{\tau}]  \mathbb{E}[u] = \\
        &\mathbb{E}[u]\mathbb{E}[q^{\tau}] \left(\frac{\mathbb{E}[p u]}{\mathbb{E}[u]}  - \frac{\mathbb{E}[p q^{\tau}]}{\mathbb{E}[q^{\tau}]}\right) = \mathbb{E}[u]\mathbb{E}[q^{\tau}] \left(\frac{\mathbb{E}[p u]}{\mathbb{E}[u]}  - \frac{\mathbb{E}[u p q^{\tau}/u] \mathbb{E}[u]}{\mathbb{E}[u]\mathbb{E}[uq^{\tau}/u]}\right) .
    \end{split}
    \end{equation}
    To finish the proof, we define $\mathbb{E}_{\lambda}[p] = \mathbb{E}[\lambda(p) p]/\mathbb{E}[\lambda(p)] $ as the expectation with respect to the probability density $\mathcal{P}(p)\lambda(p)/\mathbb{E}[\lambda(p)]$ for a positive function $\lambda(p)$.
    The parenthesis on the right hand side of \cref{eq:numerator_par_new1} can then be written as 
    \begin{equation*}
    \begin{split}
        \mathbb{E}_{u}[p] - \mathbb{E}_{q^{\tau}}[p] = \frac{\mathbb{E}_{u}[p]\mathbb{E}_{u}[q^{\tau}/u] - \mathbb{E}_{u}[p q^{\tau}/u] }{\mathbb{E}_{u}[q^{\tau}/u]} =
        - \frac{\text{COV}_u(p,q^{\tau}/u)}{\mathbb{E}_{u}[q^{\tau}/u]} \geq 0.
    \end{split}
    \end{equation*}
    The last inequality is because $q^\tau/u\geq0$ and because $q^{\tau}/u=p(1-p)^{\tau}/(1-(1-p)^{\tau})$ is strictly \emph{decreasing} with $p$, leading to a negative covariance according to the continuous version of the Chebyshev sum inequality. Thus, $D(\tau) \coloneqq \mathbb{E}[T_{\tau+1}]-\mathbb{E}[T_{\tau}] \geq 0$, which concludes the proof.   
    
\end{proof}
\begin{corollary}\label[corollary]{Lemma:Resetting_is_always_beneficial}
    For any $\mathcal{P}(p)$, and $\forall\tau\in \mathbb{N}\setminus\{0\}$, $\mathbb{E}[T]\geq \mathbb{E}[T_{\tau}]$.
\end{corollary}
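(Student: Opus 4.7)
The plan is to derive the corollary directly from \cref{lm:OptimalityReD} via a limit argument, rather than redoing any covariance manipulation. The key observation is that as $\tau \to \infty$ the ReD dynamics becomes indistinguishable from solve-to-completion: a reset occurs only when a question fails to be answered within $\tau$ attempts, and for any realization with $T<\infty$ this eventually never happens. Pathwise, $T_\tau$ converges monotonically to $T$ as $\tau\to\infty$, so I expect equality in the limit.

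Concretely, I would first fix $\tau\geq 1$ and apply \cref{lm:OptimalityReD} iteratively to obtain
\begin{equation*}
\mathbb{E}[T_\tau]\leq \mathbb{E}[T_{\tau+1}]\leq \mathbb{E}[T_{\tau+2}]\leq \cdots \leq \sup_{s\geq \tau}\mathbb{E}[T_s],
\end{equation*}
and then identify this supremum with $\mathbb{E}[T]$. The cleanest route is through the closed form in \cref{eq:MFPT}, namely $\mathbb{E}[T_s]=G(s)/F(s)$ with $G(s)=\sum_{k=0}^{s-1}(1-F(k))$. As $s\to\infty$, $F(s)\nearrow \Pr(T<\infty)$ and $G(s)\nearrow \sum_{k=0}^{\infty}(1-F(k))$, which for $\mathbb{N}$-valued $T$ is precisely $\mathbb{E}[T]$ (by the standard tail-sum formula). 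Combining monotonicity with this limit gives the claimed bound.

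A short case analysis finishes the argument. If $\Pr(T<\infty)=1$ and the tail sum converges, the ratio $G(s)/F(s)$ tends to $\mathbb{E}[T]$ and the inequality follows. If $\mathbb{E}[T]=\infty$, either because $\mathcal{P}(p)$ places mass at $p=0$ (unsolvable questions, so $\Pr(T<\infty)<1$) or because of the heavy tail arising in the sublinear regime $0<\alpha<1$ of \cref{eq:asympt_coverage@cost}, then $G(s)\to\infty$ while $F(s)$ remains bounded below by any fixed $F(\tau)>0$, so the supremum is $+\infty$ and the inequality $\mathbb{E}[T_\tau]\leq\mathbb{E}[T]=\infty$ holds trivially.

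The only subtlety I anticipate is the degenerate case where $\mathcal{P}(p)$ has an atom at $p=0$, so that $F(s)\not\to 1$; but since the right-hand side is already infinite there, no extra work is needed beyond flagging it. No new estimates are required outside of \cref{eq:MFPT} and the monotonicity established in \cref{lm:OptimalityReD}.
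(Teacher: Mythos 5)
Your argument is correct and follows essentially the same route as the paper's proof: both combine the monotonicity $\mathbb{E}[T_\tau]\leq\mathbb{E}[T_{\tau+1}]$ from \cref{lm:OptimalityReD} with the observation that taking $\tau\to\infty$ in \cref{eq:MFPT} yields $\mathbb{E}[T_\tau]\to\mathbb{E}[T]$, so that $\mathbb{E}[T]=\sup_\tau\mathbb{E}[T_\tau]$. You simply spell out the limit (tail-sum identity for $G$, behavior of $F$) and the degenerate/infinite-mean cases more explicitly than the paper's one-line proof.
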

\begin{proof}
    Recall $D(\tau)\geq 0$ and note that taking the limit $\tau\to\infty$ in \cref{eq:MFPT} gives $\lim\limits_{\tau\to\infty}\mathbb{E}[T_{\tau}]=\mathbb{E}[T]$. Therefore, $\mathbb{E}[T] = \sup\{\mathbb{E}[T_{\tau}] \}$.
    
\end{proof}

\section{Imperfect verifier}
\label[appendix]{app:noisy}
Here we address the case of an imperfect verifier and demonstrate experimentally and theoretically that ReD leads at a range of budgets.
\subsection{Experiments with imperfect verifiers}
We evaluated actual coverage under three noise conditions (FPR = false positive rate, FNR = false negative rate). The results are given for the GSM8K dataset (\cref{tab:GSM8K_Imperfect}) and MMLU-Pro (\cref{tab:MMLU-Pro_Imperfect}).
\begin{table}[H]
    \centering
    \caption{Coverage with an imperfect verifier across a range of noise conditions for the GSM8K ($N=1{,}319$, llama-3.1-8b, 1,000 realizations) dataset.}
    \begin{tabular}{lllll}
        Verifier & Method & $1 \times N$ & $2 \times N$ & $3 \times N$ \\
        \hline Perfect & Standard & 33.6\% & 66.7\% & 98.3\% \\
        \hline & ReD & \textbf{81.3}\% & \textbf{98.1}\% & 99.0\% \\
        \hline $\mathrm{FPR}=0.02, \mathrm{FNR}=0.1$ & Standard & 41.5\% & 83.1\% & 97.3\% \\
        \hline & ReD & \textbf{73.2}\% & \textbf{96.9}\% & 97.3\% \\
        \hline $\mathrm{FPR}=0.08, \mathrm{FNR}=0.15$ & Standard & 52.1\% & 94.4\% & 94.4\% \\
        \hline & ReD & \textbf{69.2}\% & 94.4\% & 94.4\% \\
        \hline $\mathrm{FPR}=0.25, \mathrm{FNR}=0.05$ & Standard & 67.5\% & 90.3\% & 90.3\% \\
        \hline & ReD & \textbf{77.3}\% & 90.3\% & 90.3\% 
    \end{tabular}
    \label{tab:GSM8K_Imperfect}
\end{table}

\begin{table}[H]
    \centering
    \caption{Coverage with an imperfect verifier across a range of noise conditions for the MMLU-Pro ($N=500$, llama-3.1-8b, 1,000 realizations) dataset.}
    \begin{tabular}{lllll}
        Verifier & Method & $1 \times N$ & $3 \times N$ & $5 \times N$ \\
        \hline Perfect & Standard & 10.4\% & 30.3\% & 50.3\% \\
        \hline & ReD & \textbf{46.5}\% & \textbf{76.1}\% & \textbf{87.0}\% \\
        \hline $\mathrm{FPR}=0.02, \mathrm{FNR}=0.1$ & Standard & 12.9\% & 38.4\% & 63.7\% \\
        \hline & ReD & \textbf{41.8}\% & \textbf{73.1}\% & \textbf{83.5}\% \\
        \hline $\mathrm{FPR}=0.08, \mathrm{FNR}=0.15$ & Standard & 18.8\% & 56.1\% & 75.0\% \\
        \hline & ReD & \textbf{39.5}\% & \textbf{70.4}\% & 75.0\% \\
        \hline $\mathrm{FPR}=0.25, \mathrm{FNR}=0.05$ & Standard & 29.5\% & 63.1\% & 63.1\% \\
        \hline & ReD & \textbf{44.2}\% & 63.1\% & 63.1\% 
    \end{tabular}
    \label{tab:MMLU-Pro_Imperfect}
\end{table}
All numbers in \cref{tab:GSM8K_Imperfect} and \cref{tab:MMLU-Pro_Imperfect} are actual coverage (a problem counts only when a truly correct answer is accepted). At an intermediate number of attempts, a non-zero FPR raises the coverage of solve-to-completion (false positives allow it to stop trying to solve hard problems sooner), while reducing ReD's, narrowing the gap. A non-zero FPR also lowers asymptotic actual coverages for both approaches. Despite that, ReD leads at a range of budgets across all noise conditions. Under high noise (FPR=0.25 or FNR=0.15), the gap closes faster on the easier dataset (GSM8K), while on the harder dataset (MMLU-Pro) the advantage persists to substantially larger budgets.

In addition, we ran the imperfect verifier simulation on HumanEval (\cref{tab:HumanEval}) across a range of realistic noise conditions.
\begin{table}[H]
    \centering
    \caption{Coverage with an imperfect verifier across a range of realistic noise conditions for the HumanEval ($N=164$, llama-3.1-8b, 200 realizations) dataset}
    \begin{tabular}{llllll}
        FPR & FNR & Method & $1 \times N$ & $3 \times N$ & $5 \times N$ \\
        \hline 0\% & 0\% & Standard & 9.3\% & 24.5\% & 39.8\% \\
        \hline & & ReD & \textbf{63.7}\% & \textbf{84.7}\% & \textbf{88.4}\% \\
        \hline 2\% & 2\% & Standard & 15.5\% & 42.2\% & 67.7\% \\
        \hline & & ReD & \textbf{62.3}\% & \textbf{83.5}\% & \textbf{86.8}\% \\
        \hline 2\% & 10\% & Standard & 14.8\% & 40.7\% & 66.1\% \\
        \hline & & ReD & \textbf{57.3}\% & \textbf{83.1}\% & \textbf{86.5}\% \\
        \hline 5\% & 5\% & Standard & 21.9\% & 62.7\% & 83.9\% \\
        \hline & & ReD & \textbf{60.6}\% & \textbf{82.4}\% & 83.8\% \\
        \hline 8\% & 8\% & Standard & 25.5\% & 75.1\% & 81.3\% \\
        \hline & & ReD & \textbf{58.9}\% & \textbf{81.1}\% & 81.5\% \\
        \hline 8\% & 15\% & Standard & 24.9\% & 72.4\% & 81.1\% \\
        \hline & & ReD & \textbf{54.3}\% & \textbf{80.5}\% & 81.0\% 
    \end{tabular}
    \label{tab:HumanEval}
\end{table}
ReD leads at a range of budgets across the entire realistic noise range.

\subsection{Proof of~\cref{prop:imperfect}.}
The point-wise advantage of ReD holds for the actual coverage, regardless of the verifier's FPR and FNR.
Consider the case of an infinite pool of questions whose distribution of probabilities to answer a question correctly has a non-degenerate density  $\mathcal{P}(p)$; and a verifier with some false-negative rate, $\text{FNR}$, and some false-positive rate, $\text{FPR}$, such that $\text{FNR}+\text{FPR}<1$. We will compute the mean reward $\mathbb{E}[\mathcal{R}]$ per attempt, i.e., the probability that a question is both solved correctly and marked as correct by the verifier.

In ReD, on an infinite question pool, each question is given a single attempt. The reward is therefore
\begin{equation}
    \mathbb{E}[\mathcal{R}_{\text{ReD}}]=\int_0^1 p(1-\text{FNR})\mathcal{P}(p)\,dp=(1-\text{FNR})\mathbb{E}[p].
\end{equation}
For a general (non-ReD) strategy, we will first compute the mean reward on a given attempt, given that the current question being asked was asked $n-1$ times before
\begin{equation}
\begin{split}
    &\pi_n=\mathbb{E}[\mathcal{R}|\text{$n-1$ previous attempts}]= \\
    &\frac{\int_0^1 \left(1-\text{FNR}\right)p\left[1-\left(1-\text{FNR}\right)p -(1-p)\text{FPR} \right]^{n-1}\mathcal{P}(p)\,dp}{\int_0^1 \left[1-\left(1-\text{FNR}\right)p-(1-p)\text{FPR}\right]^{n-1}\mathcal{P}(p)\,dp}.
\end{split}
\end{equation}
We now show that $\mathbb{E}[\mathcal{R}_{\text{ReD}}]=\int_0^1 \left(1-\text{FNR}\right)p\mathcal{P}(p)\,dp=\pi_{1}>\pi_n$.
We set $\Omega_n(p)=\left[1-\left(1-\text{FNR}\right)p -(1-p)\text{FPR} \right]^{n-1}$ and observe that

\begin{equation}
    \pi_1-\pi_n=\frac{-(1-\text{FNR})\operatorname{Cov}\left(p,\Omega_n(p)\right)}{\mathbb{E}[\Omega_n(p)]}>0
\end{equation}
where the final inequality follows from $p$ being monotonically increasing and hence negatively correlated with $\Omega_n(p)>0$, which is monotonically decreasing on the unit interval. Because we have not assumed anything on $\mathcal{P}(p)$, this argument can be done with the conditional distribution after a single previous attempt, thus proving that $\pi_2>\pi_n\quad\forall n>2$. Therefore, by induction, the series $\{\pi_n\}_{n=1}^\infty$ is monotonically decreasing.

The mean reward per attempt on a generic strategy is thus
\begin{equation}
    \mathbb{E}[\mathcal{R}]=\sum_{n=1}^\infty \pi_n\Pr(n)<\sum_{n=1}^\infty \pi_1\Pr(n)=\pi_1=(1-\text{FNR})\mathbb{E}[p]=\mathbb{E}[\mathcal{R}_{\text{ReD}}],
\end{equation}
where $\Pr(n)$ is the fraction of time that the model spends answering questions for the $n$-th attempt. This proves that the expected reward of the ReD strategy is larger than that of any other strategy.

For a finite data set, ReD acts as a greedy algorithm, as can be seen from the monotonicity of $\pi_n$. It always attempts to answer the question with the least number of failures, i.e., lowest $n$, and therefore highest expected reward.

{Since $\pi_1>\pi_n$ for all $n>1$, at every attempt $t$ the per-attempt success probability of ReD exceeds that of any competing multiple-independent-attempt strategy. By linearity of expectation, summing over attempts $1,\dots,t$ yields $\mathrm{coverage@cost}_{\mathrm{ReD}}(t)\geq \mathrm{coverage@cost}_{S}(t)$ for every budget $t$, establishing the consequence stated in the remark following \cref{prop:imperfect}. }

\section{Comparison to other baselines}
\label[appendix]{app:baselines}

\subsection{Continuous reflection}
\label[appendix]{app:Continuous_reflection}
Continuous reflection (CR) is a simple self-correction strategy. If a question is answered incorrectly, on the next attempt, we append an addition to the prompt that informs the model that the question was answered incorrectly and provides the previous answer. This process is repeated at most five times for each question in the GSM8K dataset.
The results below are for GSM8K.
\begin{table}[H]
    \centering
    \caption{Comparison with stronger allocation baselines: ReD versus CR for GSM8K ($N=1{,}319$, 1,000 realizations; 8b = llama-3.1-8b, 70b = llama-3.3-70b))}
    \begin{tabular}{llll}
        Strategy & $1 \times N$ & $2 \times N$ & $3 \times N$ \\
        \hline Standard (8b) & $33.6 \%$ & $66.7 \%$ & $98.3 \%$ \\
        \hline CR (8b) & $32.1 \%$ & $63.6 \%$ & $95.3 \%$ \\
        \hline \textbf{ReD (8b)} & $\mathbf{8 1 . 3 \%}$ & $\mathbf{9 8 . 1 \%}$ & $\mathbf{9 9 . 0 \%}$ \\
        \hline Standard (70b) & $25.1 \%$ & $49.6 \%$ & $74.4 \%$ \\
        \hline CR (70b) & $39.9 \%$ & $78.9 \%$ & $97.6 \%$ \\
        \hline \textbf{ReD (70b)} & $\mathbf{9 3 . 2 \%}$ & $\mathbf{9 7 . 0 \%}$ & $97.3 \%$
    \end{tabular}
    \label{tab:CR}
\end{table}
For the 70b model, CR outperforms solve-to-completion (standard) at $1 \times N$ and $2 \times N$, but ReD is better. For 8b, CR performs similarly to solve-to-completion. Smaller models tend to lack the ability to self-correct without substantial feedback, so CR buys little. Crucially, ReD dominates by a wide margin at every budget.

To generate ReD trajectories with self-correction, we performed the following experiment. In every ReD round, all questions that were not discarded in the previous round are asked again, and we append to the prompt a note saying they failed and give the previous answer. This strategy (Refl-ReD) did not substantially improve the performance of ReD for the GSM8K dataset, for both models, llama-3.1-8b and llama-3.3-70b, and 1000 realizations.
\begin{table}[H]
    \centering
    \caption{ReD beyond independent repeated sampling.}
    \begin{tabular}{llll} 
        Strategy & $1 \times N$ & $2 \times N$ & $3 \times N$ \\
        \hline Refl-ReD (8b) & $81.4 \%$ & $94.2 \%$ & $98.3 \%$ \\
        \hline ReD (8b) & $81.3 \%$ & $98.2 \%$ & $99.0 \%$ \\
        \hline Refl-ReD (70b) & $93.3 \%$ & $97.5 \%$ & $97.6 \%$ \\
        \hline ReD (70b) & $93.2 \%$ & $97.0 \%$ & $97.4 \%$
    \end{tabular}
    \label{tab:Refl-ReD}
\end{table}

\subsection{$\tau$-sweep on HumanEval}
\label[appendix]{app:tau_sweep}

We ran a $\tau$-sweep on HumanEval ($N=164$, both models, 200 realizations): ReD with $\tau$ attempts per problem per round before discarding, for $\tau \in \{1, 2, 4, 8\}$. $\tau=1$ is standard ReD and is optimal by Theorem~\ref{lm:OptimalityReD}.

\begin{table}[H]
    \centering
    \caption{$\tau$-sweep on HumanEval: coverage at fixed total-attempt budgets (200 realizations). Bold = best at each budget.}
    \begin{tabular}{lllll}
        Strategy & $1{\times}N$ & $2{\times}N$ & $3{\times}N$ & $5{\times}N$ \\
        \hline \multicolumn{5}{l}{\textit{llama-3.1-8b}} \\
        \hline ReD ($\tau{=}1$) & $\mathbf{63.9\%}$ & $\mathbf{80.0\%}$ & $\mathbf{84.5\%}$ & $\mathbf{88.2\%}$ \\
        $\tau{=}2$ & $36.4\%$ & $73.1\%$ & $81.7\%$ & $87.3\%$ \\
        $\tau{=}4$ & $19.8\%$ & $39.5\%$ & $59.2\%$ & $83.6\%$ \\
        $\tau{=}8$ & $10.7\%$ & $20.8\%$ & $31.4\%$ & $52.2\%$ \\
        \hline \multicolumn{5}{l}{\textit{llama-3.3-70b}} \\
        \hline ReD ($\tau{=}1$) & $\mathbf{85.0\%}$ & $\mathbf{94.1\%}$ & $\mathbf{95.0\%}$ & $\mathbf{96.0\%}$ \\
        $\tau{=}2$ & $44.4\%$ & $88.8\%$ & $94.3\%$ & $95.7\%$ \\
        $\tau{=}4$ & $22.9\%$ & $45.8\%$ & $68.7\%$ & $94.4\%$ \\
        $\tau{=}8$ & $12.0\%$ & $23.4\%$ & $35.4\%$ & $58.7\%$ \\
    \end{tabular}
    \label{tab:tau_sweep}
\end{table}
ReD ($\tau=1$) dominates at every budget for both models, empirically confirming Theorem~\ref{lm:OptimalityReD}.




\section{The influence of KV-cache on the GPU time with ReD}
\label[appendix]{app:kvcache}
{In this appendix, we evaluate a conservative setting in which solve-to-completion retains the KV cache whereas ReD does not, and show that ReD remains advantageous.}

Following \citet{zhong2024distserve}, we estimate the overall request latency as the sum of the time-to-first-token (TTFT) plus time-per-output-token (TPOT) times the number of generated tokens. We also define per-token rates $L_p=\text{TTFT}/N_{\text{in}}$ (prefill) and $L_d=\text{TPOT}$ (decode), where $N_{\text{in}}$ and $N_{\text{out}}^{(j)}$ are the number of input tokens and output tokens on attempt $j$, respectively. We model GPU time per problem as $T(k)$, where $k$ is the number of attempts:
\begin{itemize}
    \item \textbf{Solve-to-completion} (prefill paid once, KV cache retained):
    \begin{equation*}
        T_{\text{standard}}(k)=N_{\text{in}}  L_p+\sum_{j=1}^k N_{\text{out}}^{(j)}  L_d
    \end{equation*}

    \item \textbf{Strict ReD} (full prefill on every attempt, zero cache retention):
    \begin{equation*}
        T_{\text{ReD}}(k)=\sum_{j=1}^k \left(N_{\text{in}}  L_p+ N_{\text{out}}^{(j)}  L_d\right)
    \end{equation*}
\end{itemize}

We set $L_p = 5$ ms/token and $L_d = 20$ ms/token ($L_d/L_p = 4$). We note that this ratio is a substantial underestimate: on H100 hardware running llama-3.1-8b, the E2E Networks inference benchmark~\cite{e2e_inference_benchmarks} reports TTFT $\approx$ 29ms for 128-token inputs ($L_p \approx 0.23$ ms/token) and TPOT $\approx 22$ ms/token, giving $L_d/L_p \approx 100$. A higher ratio means prefill is relatively cheaper, so the advantage of solve-to-completion from caching a single prefill per problem shrinks. Our simulation therefore \textit{overestimates} the KV-cache benefit of solve-to-completion and is conservative for ReD.

The results below are for GSM8K.
\begin{table}[H]
    \centering
    \caption{Hardware latency estimate for GSM8K ($N = 1{,}319$, 1,000 realizations, 8b=llama-3.1-8b, 70b=llama-3.3-70b)}
    \begin{tabular}{llll}
        GPU-s & Method & $\mathbf{8 b}$ & $\mathbf{7 0 b}$ \\
        \hline 2,219 & Standard & $10.1 \%$ & $9.5 \%$ \\
        \hline & ReD & $\mathbf{2 6 . 0} \%$ & $\mathbf{3 4 . 0} \%$ \\
        \hline 11,516 & Standard & $50.0 \%$ & $47.2 \%$ \\
        \hline & ReD & $\mathbf{9 5 . 6} \%$ & $\mathbf{9 6 . 9} \%$ \\
        \hline 21,952 & Standard & $94.5 \%$ & $89.6 \%$ \\
        \hline & ReD & $\mathbf{9 8 . 8} \%$ & $\mathbf{9 7 . 4} \%$
    \end{tabular}
    \label{tab:Hardware_latency}
\end{table}

Even in the worst-case KV-cache scenario (zero retention), ReD outperforms solve-to-completion at all GPU-second budgets tested. ReD's advantage compensates for the prefill overhead incurred on every context switch in this case.

\section{Evaluating the power-law exponent for llama-3.1-8b-instant}
\label[appendix]{sec:SI-experiments}

\begin{figure}[h]
    \centering
    \includegraphics[width=0.75\linewidth]{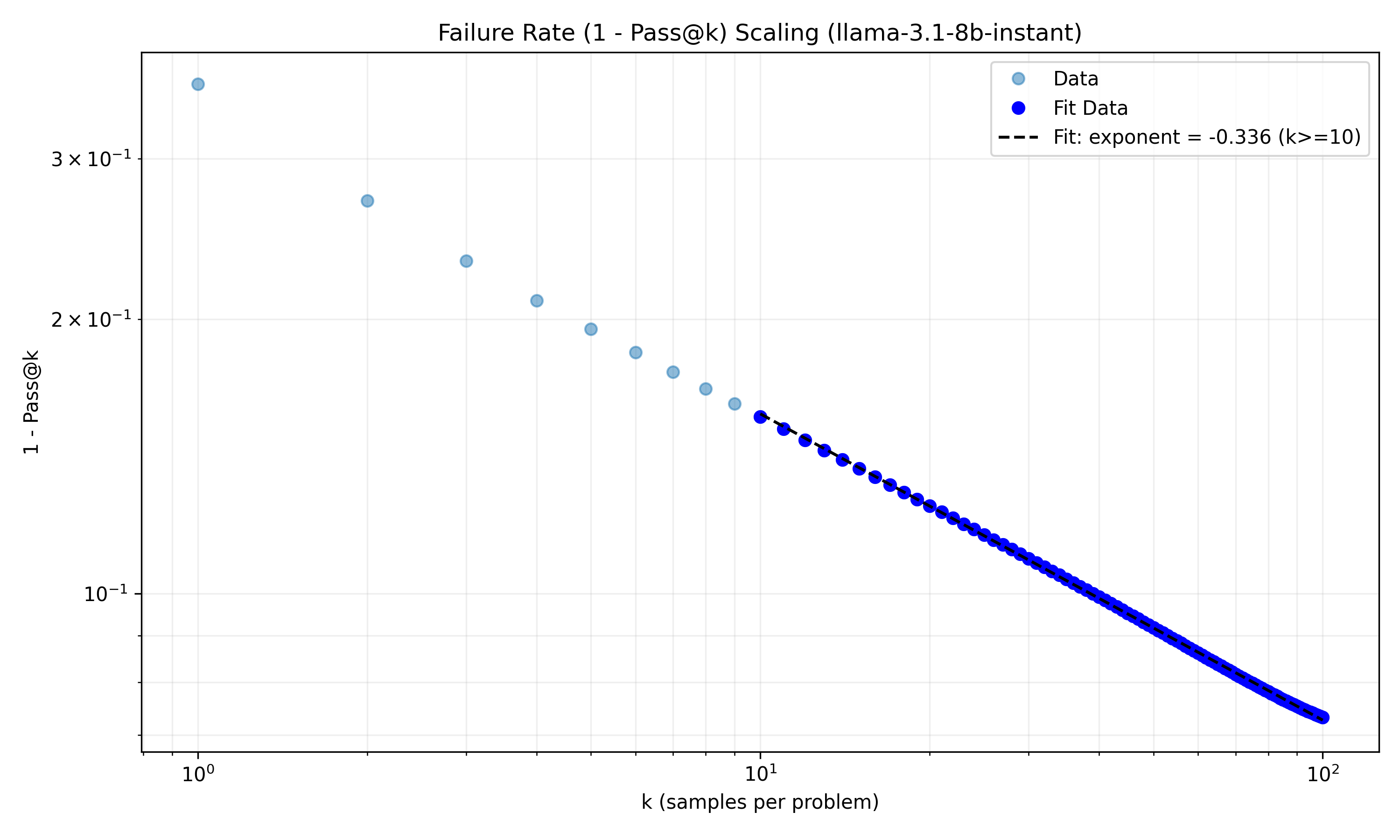}
    \caption{A linear fit of $\log(1-\text{pass@}k)$, of the llama-3.1-8b-instant, versus $\log k$. For $k\gg1$ indeed $(1-\text{pass@}k)\propto k^{-\alpha}$, and the obtained power-law exponent is $\alpha=0.34$.}
    \label{fig:loglog_alpha}
\end{figure}

\end{document}